\DeclareMathOperator*{\argmax}{argmax}
\DeclareMathOperator*{\argmin}{argmin}
\newtheorem{theorem}{Theorem}[section]
\DeclareMathOperator{\xinput}{\mathbf{x}}
\DeclareMathOperator{\Js}{J_{\text{s}}}
\DeclareMathOperator{\Jt}{J_{\text{t}}}
\DeclareMathOperator{\J}{J}
\DeclarePairedDelimiterX{\infdivx}[2]{(}{)}{%
  #1\;\delimsize\|\;#2%
}
\newcommand{\klinfdiv}{KL\infdivx}
\ifcvprfinal\pagestyle{empty}\fi
\begin{document}

%%%%%%%%% TITLE
\title{What it Thinks is Important is Important: \\Robustness Transfers through Input Gradients}

\author{Alvin Chan$^1$\thanks{Corresponding author: \texttt{guoweial001@ntu.edu.sg}},~~ Yi Tay$^1$,~~ Yew-Soon Ong$^{1,2}$\\
$^1$Nanyang Technological University, \:\:\: $^2$
AI3, A$^{*}$STAR, Singapore
}

% \author{Alvin Chan\\
% Nanyang Technological University\\
% % Institution1 address\\
% % {\tt\small guoweial001@ntu.edu.sg}
% % For a paper whose authors are all at the same institution,
% % omit the following lines up until the closing ``}''.
% % Additional authors and addresses can be added with ``\and'',
% % just like the second author.
% % To save space, use either the email address or home page, not both
% \and
% Yi Tay\\
% Nanyang Technological University\\
% % Institution1 address\\
% % {\tt\small ytay017@ntu.edu.sg}
% \and
% Yew-Soon Ong\\
% Nanyang Technological University\\
% % Institution1 address\\
% % {\tt\small asysong@ntu.edu.sg}
% }

\maketitle
%\thispagestyle{empty}

%%%%%%%%% ABSTRACT
\begin{abstract}
Adversarial perturbations are imperceptible changes to input pixels that can change the prediction of deep learning models. Learned weights of models robust to such perturbations are previously found to be transferable across different tasks but this applies only if the model architecture for the source and target tasks is the same. Input gradients characterize how small changes at each input pixel affect the model output. Using only natural images, we show here that training a student model's input gradients to match those of a robust teacher model can gain robustness close to a strong baseline that is robustly trained from scratch. Through experiments in MNIST, CIFAR-10, CIFAR-100 and Tiny-ImageNet, we show that our proposed method, input gradient adversarial matching (IGAM) \footnote{Codes and models are released at: \texttt{https://github.com/alvinchangw/IGAM\_CVPR2020}}, can transfer robustness across different tasks and even across different model architectures. This demonstrates that directly targeting the semantics of input gradients is a feasible way towards adversarial robustness.

%   The ABSTRACT is to be in fully-justified italicized text, at the top
%   of the left-hand column, below the author and affiliation
%   information. Use the word ``Abstract'' as the title, in 12-point
%   Times, boldface type, centered relative to the column, initially
%   capitalized. The abstract is to be in 10-point, single-spaced type.
%   Leave two blank lines after the Abstract, then begin the main text.
%   Look at previous CVPR abstracts to get a feel for style and length.
\end{abstract}

%%%%%%%%% BODY TEXT
\section{Introduction}
Deep learning models have shown remarkable performances in a wide range of computer vision tasks \cite{lecun2015deep,touvron2019fixing,lin2019coco} but can be easily fooled by adversarial examples \cite{szegedy2013intriguing}. These examples are crafted by imperceptible perturbations and can manipulate a model's prediction during test time. Due to its potential security risk in deployment of deep neural networks, adversarial examples have received much research attention with many new attacks \cite{carlini2017towards,papernot2018cleverhans,croce2019minimally} and defenses \cite{schott2018towards,prakash2018deflecting,liao2018defense,gowal2018effectiveness,zhang2019theoretically,andriushchenko2019provably} proposed recently.

While there is still a wide gap between accuracy on clean and adversarial samples, the strongest defenses rely mostly on adversarial training (AT) \cite{goodfellow2014explaining,madry2017towards,shafahi2019adversarial}. Adversarial training's main idea, simple yet effective, involves training the model with adversarial samples generated in each training loop. However, crafting strong adversarial training samples is computationally expensive as it entails iterative gradient steps with respect to the loss function \cite{kannan2018adversarial,xie2019feature}.

To circumvent the cost of AT, a recent line of work explores transferring adversarial robustness from robust models to new tasks \cite{hendrycks2019using,shafahi2019adversarially}. To transfer to a target task, current such techniques involve finetuning new layers on top of robust feature extractors that were pre-trained on other domains (source task). While this approach is effective in transferring robustness across different tasks, it assumes that the source task and target task models have similar architecture as pre-trained weights are the medium of transfer.

Here, we propose a robustness transfer method that is both task- and architecture-agnostic with input gradient as the medium of transfer. Our approach, input gradient adversarial matching (IGAM), is inspired by observations \cite{tsipras2018robustness,etmann2019connection} that robust AT-trained models display visibly salient input gradients while their non-robust standard trained models have noisy input gradients (Figure~\ref{fig:input gradients nonrobust and robust}). The value of input gradient at each pixel defines how a small change there can affect the model's output and can be loosely thought as to how important each pixel is for prediction. Here, we show that learning to emulate how robust models view `importance' on images through input gradients can result in robust models even without adversarial training examples.

The core idea behind our approach is to train a student model with an adversarial objective to fool a discriminator into perceiving the student's input gradients as those from a robust teacher model. To transfer across different tasks, the teacher model's logit layer is first briefly finetuned on the target task's data, like in \cite{shafahi2019adversarially}. Subsequently, the teacher model's weights are frozen while a student model is adversarially trained with a separate discriminator network in a min-max game so that the input gradients from the student and teacher models are semantically similar, i.e., indistinguishable for the discriminator model \cite{goodfellow2014generative}.

Through experiments in MNIST, CIFAR-10, CIFAR-100 and Tiny-ImageNet, we show that input gradients are a feasible medium to transfer robustness, outperforming finetuning on transferred weights. Surprisingly, student models even outperform their teacher models in both clean accuracy and adversarial robustness. In some cases, the student model's adversarial robustness is close to that of a strong baseline that is adversarially trained from scratch. Though our method does not beat the state of the art robustness, it shows that addressing the semantics of input gradients is a new promising way towards robustness.

In summary, the key contributions of this paper are as follows:
\begin{itemize}
    \item For the first time, we show that robustness can transfer across different model architectures.
    \item We achieve this by training the student model's input gradients to semantically match those of a robust teacher model through our proposed method.
    \item Through extensive experiments, we show that input gradients are a more effective and versatile medium to transfer robustness than pre-trained weights.
\end{itemize}
% \lipsum[1-5]

\section{Background}
We review the concept of adversarial robustness for image classification and its relationship with input gradients.
% \lipsum[1]
\paragraph{Adversarial Robustness}
% \subsection{Adversarial Robustness}
% \lipsum[1-2]
We express an image classifier as $f(\xinput ; \theta): \xinput \mapsto \mathbb{R}^k$ that maps an input image $\xinput$ to output probabilities for $k$ classes in set $C$, where classifier's parameters is defined as $\theta$. Denoting training dataset as $D$, empirical risk minimization is the standard way to train a classifier $f$, through $\min_{\theta} \mathbb{E}_{(\xinput, \mathbf{y}) \sim D} L(\xinput, \mathbf{y})$, where $\mathbf{y} \in \mathbb{R}^k$ is the one-hot label for the image and $L(\xinput, \mathbf{y})$ is the standard cross-entropy loss: 

\begin{equation}
 L(\xinput, \mathbf{y}) = \mathbb{E}_{(\xinput, \mathbf{y}) \sim D} \left[ - \mathbf{y}^\top \log f (\xinput) \right]
\end{equation}

With this training method, deep learning models typically show good performance on clean test samples but fail in the classification of adversarial test samples. With an adversarial perturbation of magnitude $\varepsilon$ at input $\xinput$, a model is considered robust against this attack if
\begin{equation}
    \argmax_{i \in C} f_i(\xinput ; \theta) = \argmax_{i \in C} f_i(\xinput + \delta ; \theta) 
\end{equation}

where $\forall \delta \in B_p (\varepsilon) = {\delta : \| \delta \|_p \leq \varepsilon}$. With small $\varepsilon$, adversarial perturbation with $p = \infty$ is often imperceptible and is the focus in this paper.

\paragraph{Input Gradients of Robust Models}
% \subsection{Input Gradients of Robust Models}

Input gradients characterize how an infinitesimally small change to the input affects the output of the model. Given a pair of input and label $(\xinput, \mathbf{y})$, its corresponding input gradient $\nabla_{\xinput} \mathcal{L(\xinput, \mathbf{y})}$ can be computed through gradient backpropagation in a neural network to its input layer. For classification tasks, the input gradient can be loosely interpreted as a pixel map of what the model thinks is important for its class prediction.

It was observed \cite{tsipras2018robustness} that robust models that are adversarially trained display an interesting phenomenon: they produce salient input gradients that loosely resemble input images while less robust standard models display noisier input gradients (Figure~\ref{fig:input gradients nonrobust and robust}). \cite{etmann2019connection} shows in linear models that distance from samples to decision boundary increases as the alignment between the input gradient and input image grows but this weakens for non-linear neural networks. While these previous studies show that robustly trained models result in salient input gradients, our paper studies input gradients as a medium to transfer robustness across different models.

\begin{figure}[ht]
    \centering
    \includegraphics[width=0.6\linewidth]{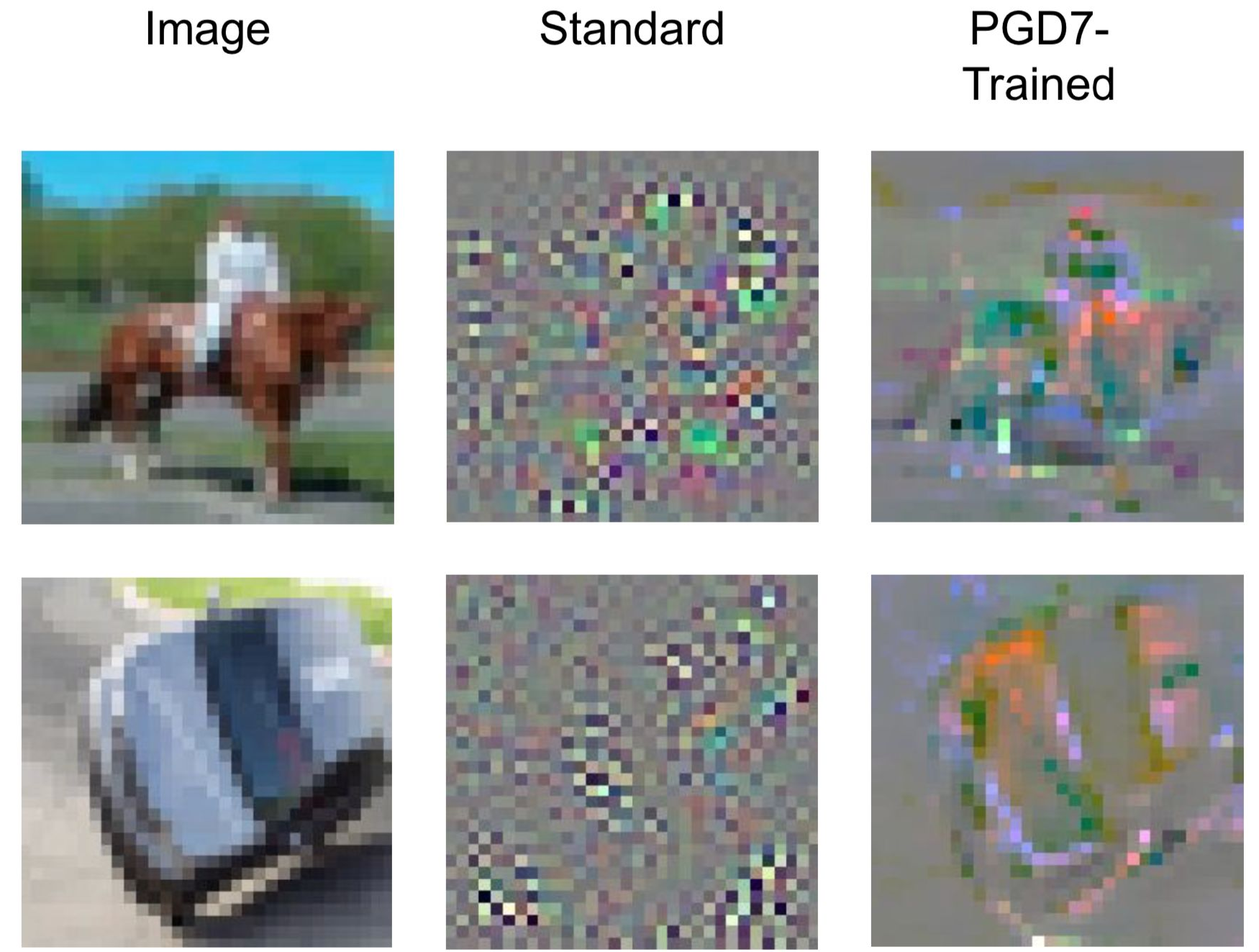}
    \caption{Input gradients of (middle) a non-robust model and (right) robust model on CIFAR-10 images. The non-robust model undergoes standard SGD training with natural images while the robust model is trained with 7-step PGD adversarial examples.}
    \label{fig:input gradients nonrobust and robust}
\end{figure}

\section{Related Work}
We review prior art on defense against adversarial examples and highlight those that are most similar to our work.

% \subsection{Adversarial Training}
\paragraph{Adversarial Training}
With the aim of gaining robustness against adversarial examples, the core idea of adversarial training (AT) is to train models with adversarial training examples. Formally, AT minimizes the loss function:

\begin{equation} \label{eq:AT objective}
L(\xinput, \mathbf{y}) = \mathbb{E}_{(\xinput, \mathbf{y}) \sim D} \left[ \max_{\delta \in B(\varepsilon)} L(\xinput + \delta, \mathbf{y}) \right]
\end{equation}

where $\max_{\delta \in B(\varepsilon)} L(\xinput + \delta, \mathbf{y})$ is computed via gradient-based optimization methods. One of the strongest defenses employ projected gradient descent (PGD) which carries out the following gradient step iteratively:
\begin{equation} \label{eq:pgd step}
    \delta \gets \mathrm{Proj} \left[ \delta - \eta ~ \mathrm{sign} \left( \nabla_{\delta} L(\xinput + \delta, \mathbf{y}) \right) \right ]
\end{equation}
where $\mathrm{Proj}(\xinput) = \argmin_{\epsilon \in B(\varepsilon)} \| \xinput - \epsilon \|$. 

AT has seen many adaptations since its introduction. A recent work \cite{zhang2019defense} seeks to generate more effective adversarial training examples through maximizing feature matching distance between those examples and clean samples. To smoothen the loss landscape so that model prediction is not drastically affected by small perturbations, \cite{qin2019adversarial} proposed minimizing the difference between the linearly estimated and real loss value of adversarial examples. Another work, TRADES \cite{zhang2019theoretically}, reduces the difference between the prediction of natural and adversarial examples through a regularization term to smoothen the model's decision boundary.

% \subsection{Non-Adversarial Training Defense}
\paragraph{Non-Adversarial Training Defense}
Closely linked to our method, there is a line of work that regularizes the input gradients to boost robustness. Those prior art \cite{ross2018improving,jakubovitz2018improving} focus on using double backpropagation \cite{drucker1991double} to minimize the input gradients' Frobenius norm. Those approaches aim to constrain the effect that changes at individual pixels have on the classifier's output but not the overall semantics of the input gradients like our method. \cite{chan2019jacobian} show that models can be more robust when regularized to produce input gradients that resemble input images.

Several recent methods fall under the category of provable defenses that seeks to bound minimum adversarial perturbation for a subset of neural networks \cite{hein2017formal,raghunathan2018semidefinite,wong2018scaling}. These defenses typically first find a theoretical lower bound for the adversarial perturbation and optimize this bound during training to boost adversarial robustness.

% \subsection{Robustness Transfer}
\paragraph{Robustness Transfer}
There is a line of work that shows robustness can transfer from one model to another. \cite{hendrycks2019using} shows that robustness from adversarial training can be improved if the models are pre-trained from tasks from other domains. Another work shows that adversarially trained learn robust feature extractors that can be directly transferred to a new task by finetuning a new logit layer on top of these extractors \cite{shafahi2019adversarially}. Circumventing adversarial training, these transferred models can still retain a high degree of robustness across tasks. Unlike our method, these two work require that the source and target models both have the same model architecture since pre-trained weights are directly transferred.

\section{Input Gradient Adversarial Matching}
Our proposed training method consists of two phases: 1) finetuning robust teacher model on target task and 2) adversarial regularization of input gradients during the student models' training.

\subsection{Finetuning Teacher Classifier}
The first stage involves finetuning the weights of the teacher model $f_{\text{t}}$ on the target task. Parameterizing the model weights as $\psi$, the finetuning stage minimizes the cross-entropy loss over the target task training data $(\xinput, \mathbf{y}) \sim \mathcal{D}_{\text{target}}$:

\begin{equation}
    L_{\psi,\text{xent}}(\xinput, \mathbf{y}, \psi) = \mathbb{E}_{(\xinput, \mathbf{y})} \left[ - \mathbf{y}^\top \log f_{\text{t}} (\xinput) \right]
\end{equation}

where $\xinput \in \mathbb{R}^{hwc}$ for $h \times w$-size images with $c$ channels, $\mathbf{y} \in \mathbb{R}^k$ is one-hot label vector of $k$ classes.

To preserve the robust learned representations in the teacher model \cite{shafahi2019adversarially}, we freeze all the weights and replace the final logits layer to finetune. Denoting the frozen weights as $\psi^{\dagger}$ and the new logits layer as $\psi_{\text{logit}}$, the teacher model finetuning objective is

\begin{equation}
    \psi_{\text{logit}}^* = \argmin_{\psi_{\text{logit}}} L_{\text{xent}}( \mathbf{z} (\xinput, \psi^{\dagger}), \mathbf{y}, \psi_{\text{logit}})
\end{equation}

where $\mathbf{z} (\xinput, \psi^{\dagger})$ represents the hidden features before the logit layer. After finetuning the logits layer on the target task, all the teacher model's parameters ($\psi$) are fixed, including $\psi_{\text{logit}}$.

\subsection{Input Gradient Matching}
The aim of the input gradient matching is to train the student model to generate input gradients that semantically resemble those from the teacher model. The input gradient characterizes how the loss value is affected by small changes to each input pixel. 

We express the classification cross entropy loss of the student model $f_{\text{s}}$ on the target task dataset $\mathcal{D}_{\text{target}}$ as:
\begin{equation} \label{eq:xent loss}
    L_{\theta,\text{xent}}(\xinput, \mathbf{y}, \theta) = \mathbb{E}_{(\xinput, \mathbf{y})} \left[ - \mathbf{y}^\top \log f_{\text{s}} (\xinput) \right]
\end{equation}

Through gradient backpropagation, the input gradient of the student model $f_{\text{s}}$ is
\begin{equation}
\Js(\xinput) \coloneqq \nabla_{\xinput} L_{\theta,\text{xent}} = \left[\frac{\partial L_{\theta,\text{xent}}}{\partial {\xinput_{1}}} ~~~\cdots~~~ \frac{\partial L_{\theta,\text{xent}}}{\partial {\xinput_{d}}} \right]
\end{equation}

where $d=hwc$.

Correspondingly, the input gradient of the teacher model $f_{\text{t}}$ is
\begin{equation}
\Jt(\xinput) \coloneqq \nabla_{\xinput} L_{\psi,\text{xent}} = \left[\frac{\partial L_{\psi,\text{xent}}}{\partial {\xinput_{1}}} ~~~\cdots~~~ \frac{\partial L_{\psi,\text{xent}}}{\partial {\xinput_{d}}} \right]
\end{equation}

\subsubsection{Adversarial Regularization} 
To achieve the objective of training the student model's input gradient $\Js$ to resemble those from the teacher model $\Jt$, we draw inspiration from GANs, a framework comprising a generator and discriminator model. In our case, we train the $f_{\text{s}}$ to make it hard for the discriminator $f_{\text{disc}}$ to distinguish between $\Jt$ and $\Js$. The discriminator output value $f_{\text{disc}}(\J)$ represents the probability that $\J$ came from the teacher model $f_{\text{t}}$ rather than $f_{\text{s}}$. To train $f_{\text{s}}$ to produce $\Js$ that $f_{\text{disc}}$ perceive as $\Jt$, we employ the following adversarial loss:

\begin{equation}  \label{eq:adv loss}
\begin{aligned}
L_{\text{adv}} 
& = \mathbb{E}_{\Jt} [ \log f_{\text{disc}}(\Jt) ] + \mathbb{E}_{\Js} [\log (1 - f_{\text{disc}}(\Js))] 
\end{aligned}
\end{equation}

Combining this regularization loss with the classification loss function $L_{\text{xent}}$ in Equation~(\ref{eq:xent loss}), we can optimize through stochastic gradient descent (SGD) to approximate the optimal parameters for $f_{\text{s}}$ as follows,

\begin{equation}  \label{eq:xent and adv loss}
\theta^* = \argmin_{\theta} (L_{\theta,\text{xent}} + \lambda_{\text{adv}} L_{\text{adv}})
\end{equation}

where $\lambda_{\text{adv}}$ control how much input gradient adversarial regularization term dominates the training.

In contrast, the discriminator ($f_{\text{disc}}$) learns to correctly distinguish the input gradients by maximizing the adversarial loss term. Parameterizing $f_{\text{disc}}$ with $\phi$, the discriminator is also trained with SGD as such
\begin{equation}
\phi^* = \argmax_{\phi} L_{\text{adv}}
\end{equation}

\subsubsection{Reconstruction Regularization}
Apart from the adversarial loss term, we also employ a term to penalize the $l_2$ difference between the $\Js$ and $\Jt$ generated from the same input image.

\begin{equation}  \label{eq:l2 diff loss}
\begin{aligned}
L_{\text{diff}} = \| \Js - \Jt \|^2_2 
\end{aligned}
\end{equation}

The $L_{\text{diff}}$ term is analogous to the additional reconstruction loss in a VAE-GAN setup \cite{larsen2015autoencoding} where it has shown to improve performance. For each given input image ($\xinput$) in IGAM, there is a corresponding target input gradient $\Jt$ for the student model's $\Js$ to match, allowing us to exploit this instance matching loss ($L_{\text{diff}}$). Adding this term with Equation~\ref{eq:xent and adv loss}, the final training objective of the student model is  
\begin{equation}  \label{eq:total student loss}
\theta^* = \argmin_{\theta} (L_{\theta,\text{xent}} + \lambda_{\text{adv}} L_{\text{adv}} + \lambda_{\text{diff}} L_{\text{diff}})
\end{equation}

where $\lambda_{\text{diff}}$ determines the weight of the $l_2$ penalty term in the training.

Figure~\ref{fig:IGAM architecture} shows a summary of IGAM training phase while Algorithm~\ref{algo:IGAM Training} details the corresponding pseudo-codes.

\begin{figure}[ht]
    \centering
    \includegraphics[width=0.5\linewidth]{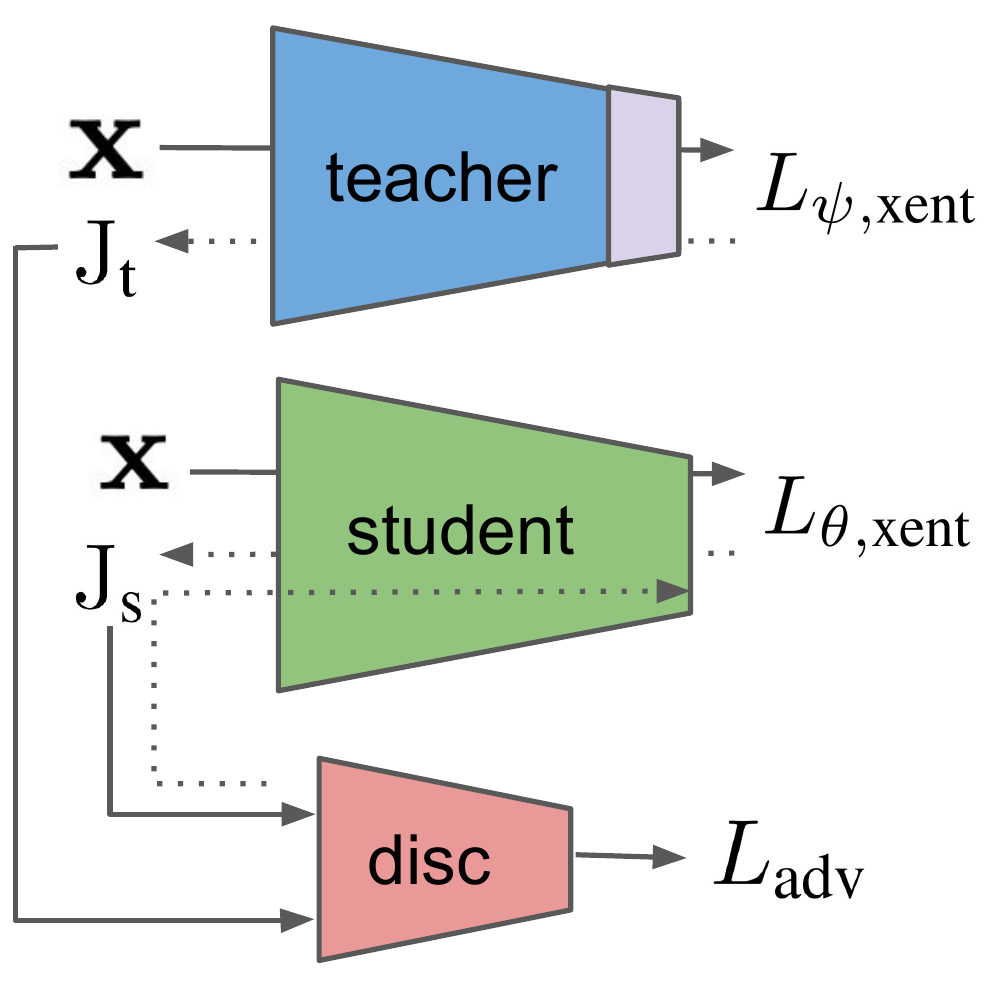}
    \caption{Training phase of input gradient adversarial matching (IGAM).}
    \label{fig:IGAM architecture}
\end{figure}

\begin{algorithm}
% \small
\footnotesize
 \caption{Input gradient adversarial matching}
 \label{algo:IGAM Training}

\textbf{Input:} Target task training data $\mathcal{D}_{\text{train}}$, Learning rates for teacher model $f_{\text{t}}$, student model $f_{\text{s}}$ and discriminator $f_{\text{disc}}$: ($\alpha, \beta, \gamma$)

\For{ each finetuning iteration }{
 Sample $(\xinput, \mathbf{y}) \sim \mathcal{D}_{\text{train}}$

%  $\xinput \gets \xinput + \xi,~~~~ \xi_{i} \sim \mathrm{unif}[-\varepsilon, \varepsilon] $
 
 $L_{\psi,\text{xent}} \gets - \mathbf{y}^\top \log f_{\text{t}} (\xinput)$ \algorithmiccomment{Classification loss}
 
 $\psi_{\text{logit}} \gets \psi_{\text{logit}} - \alpha ~ \nabla_{\psi_{\text{logit}}} L_{\psi,\text{xent}}$ \algorithmiccomment{Update teacher $f_{\text{t}}$ to minimize $L_{\psi,\text{xent}}$}
 
}

\For{ each training iteration }{
 Sample $(\xinput, \mathbf{y}) \sim \mathcal{D}_{\text{train}}$
 
 $L_{\psi,\text{xent}} \gets - \mathbf{y}^\top \log f_{\text{t}} (\xinput)$ \algorithmiccomment{Classification loss for teacher}
 
 $\Jt \gets \nabla_{\xinput} L_{\psi,\text{xent}}$ \algorithmiccomment{Compute teacher input gradient}
 
 $L_{\theta,\text{xent}} \gets - \mathbf{y}^\top \log f_{\text{s}} (\xinput)$ \algorithmiccomment{Classification loss for student}
 
 $\Js \gets \nabla_{\xinput} L_{\theta,\text{xent}}$ \algorithmiccomment{Compute student input gradient}

 $L_{\text{adv}} \gets \log f_{\text{disc}}(\Jt) + \log ( 1- f_{\text{disc}}(\Js))$ \algorithmiccomment{Adversarial loss}
 
 $L_{\text{diff}} \gets \| \Js - \Jt \|^2_2$ \algorithmiccomment{$l_2$ penalty loss}

 $\theta \gets \theta - \beta ~ \nabla_{\theta} (L_{\theta,\text{xent}} + \lambda_{\text{adv}} L_{\text{adv}} + \lambda_{\text{diff}} L_{\text{diff}})  $ \algorithmiccomment{Update the student $f_{\text{s}}$ to minimize $L_{\theta,\text{xent}}$, $L_{\text{adv}}$ and $L_{\text{diff}}$}
 
$\phi \gets \phi + \gamma ~ \nabla_{\phi} L_{adv}  $  \algorithmiccomment{Update discriminator $f_{\text{disc}}$ to maximize $L_{\text{adv}}$}
 
}
\end{algorithm}

\subsection{Transfer With Different Input Dimensions}
In the earlier sections, we assume that the input dimensions of the teacher and student models are the same. Recall that before finetuning, the teacher model $f_{\text{t}}$ was originally trained on source task samples $( \xinput_{\text{src}}, \mathbf{y}_{\text{src}} ) \sim \mathcal{D}_{\text{src}}, \xinput_{\text{src}} \in \mathbb{R}^{d_{\text{src}}}$ where each $\xinput_{\text{src}}$ is a $h_{\text{src}} \times w_{\text{src}}$-size image with $c_{\text{src}}$ channels. In practice, the image dimensions may differ from those from the task target, i.e., $d_{\text{src}} \neq d_{\text{tar}}$. To allow the gradient backpropagation of the losses through the input gradients, we use affine functions to adapt the target task images to match the dimension of the teacher model's input layer:

\begin{equation} 
\begin{aligned}
\xinput_{\text{tar}}' = \mathbf{A} \cdot \xinput_{\text{tar}} + \mathbf{b}
\end{aligned}
\end{equation}

where $\xinput_{\text{tar}}', \mathbf{b} \in \mathbb{R}^{d_{\text{src}}}, \xinput_{\text{tar}} \in \mathbb{R}^{d_{\text{tar}}}$ and $ \mathbf{A} \in \mathbb{R}^{d_{\text{src}} \times d_{\text{tar}}}$.

Subsequently, cross-entropy loss for the teacher model can be computed:
\begin{equation} \label{eq:teacher xent loss}
    L_{\psi,\text{xent}}(\xinput_{\text{tar}}, \mathbf{y}_{\text{tar}}, \psi) = \mathbb{E}_{(\xinput_{\text{tar}}, \mathbf{y}_{\text{tar}})} \left[ - \mathbf{y}_{\text{tar}}^\top \log f_{\text{t}} (\xinput_{\text{tar}}') \right]
\end{equation}

Since affine functions are continuously differentiable, we can backprop to get the input gradient:
\begin{equation}
\Jt(\xinput_{\text{tar}}) = \nabla_{\xinput_{\text{tar}}} L_{\psi,\text{xent}}
\end{equation}

We use a range of such transformations in our experiments to cater for the difference of input dimensions from various source-target dataset pairs. 

\subsubsection{Input Resizing} \label{sec:input resizing}
Image resizing is one such transformation where the resized image can be expressed as the output of an affine function, i.e., $\xinput_{\text{tar}}' = \mathbf{A} \cdot \xinput_{\text{tar}}$. In the case where the teacher model's input dimension is smaller than the student model, i.e., $d_{\text{tar}} > d_{\text{src}}$, we can use average pooling to downsize the image. A $2\times2$ average pooling is equivalent to resizing with bilinear interpolation when $d_{\text{tar}}$ is a multiple of $d_{\text{src}}$. Figure~\ref{fig:image avgpool} shows how we use input resizing to generate the input gradient from the teacher model. For cases of $d_{\text{tar}} < d_{\text{src}}$, we use image resizing with bilinear interpolation to upscale the input dimension before feeding into the teacher model. For the source-target pair of MNIST-CIFAR, we can similarly reduce the number of channels by averaging the RGB values of the CIFAR images before feeding to the teacher model (trained on MNIST).

% \lipsum[1-2]
\subsubsection{Input Cropping} \label{sec:input cropping}
Cropping is another way to downsize the image to fit a smaller teacher model's input dimension, i.e., $d_{\text{tar}} > d_{\text{src}}$. The cropped image is output of $\xinput_{\text{tar}}' = \mathbf{A} \cdot \xinput_{\text{tar}}$ where $\mathbf{A}$ is a row-truncated identity matrix. For input cropping, the initial $\Jt$ would have zero values at the region where the image was cropped out since those pixel values are multiplied by zero. To prevent the discriminator from exploiting this property to distinguish $\Jt$ from $\Js$, we feed into the discriminator $\Jt$ and $\Js$ that are cropped to size $d_{\text{src}}$. Figure~\ref{fig:image crop} shows how we use cropping to generate the cropped input gradient from the teacher model. 

% \lipsum[1-2]
\subsubsection{Input Padding} \label{sec:input padding}
In contrast to cropping, padding can be used for cases where $d_{\text{tar}} < d_{\text{src}}$. With the same form of affine function $\xinput_{\text{tar}}' = \mathbf{A} \cdot \xinput_{\text{tar}}$, $\mathbf{A}$ is a identity matrix preppended and appended with zero-valued rows. Figure~\ref{fig:image pad} shows how we generate the input gradient from the teacher model with input padding.

\begin{figure}[!htbp]
\begin{subfigure}{0.48\linewidth}
  \centering
  \includegraphics[width=\linewidth]{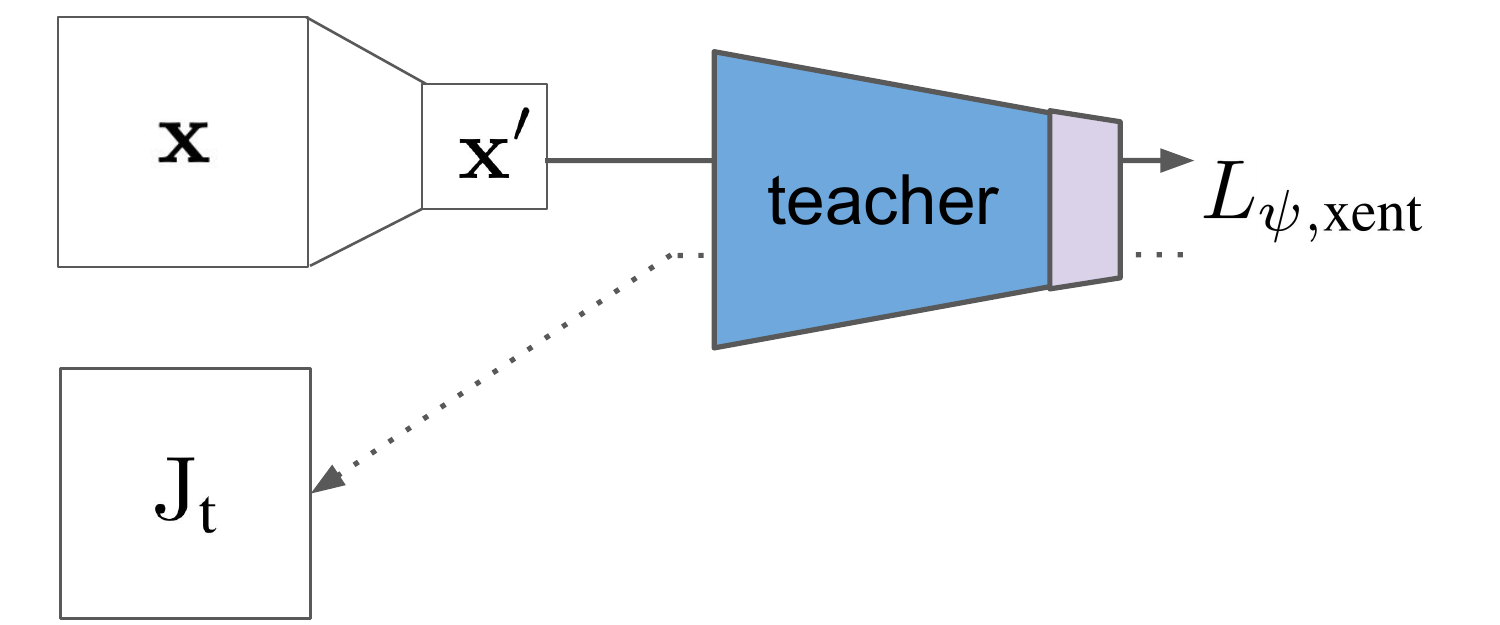}
  \caption{Input resizing}
  \label{fig:image avgpool}
\end{subfigure}
\begin{subfigure}{0.48\linewidth}
  \centering
  \includegraphics[width=\linewidth]{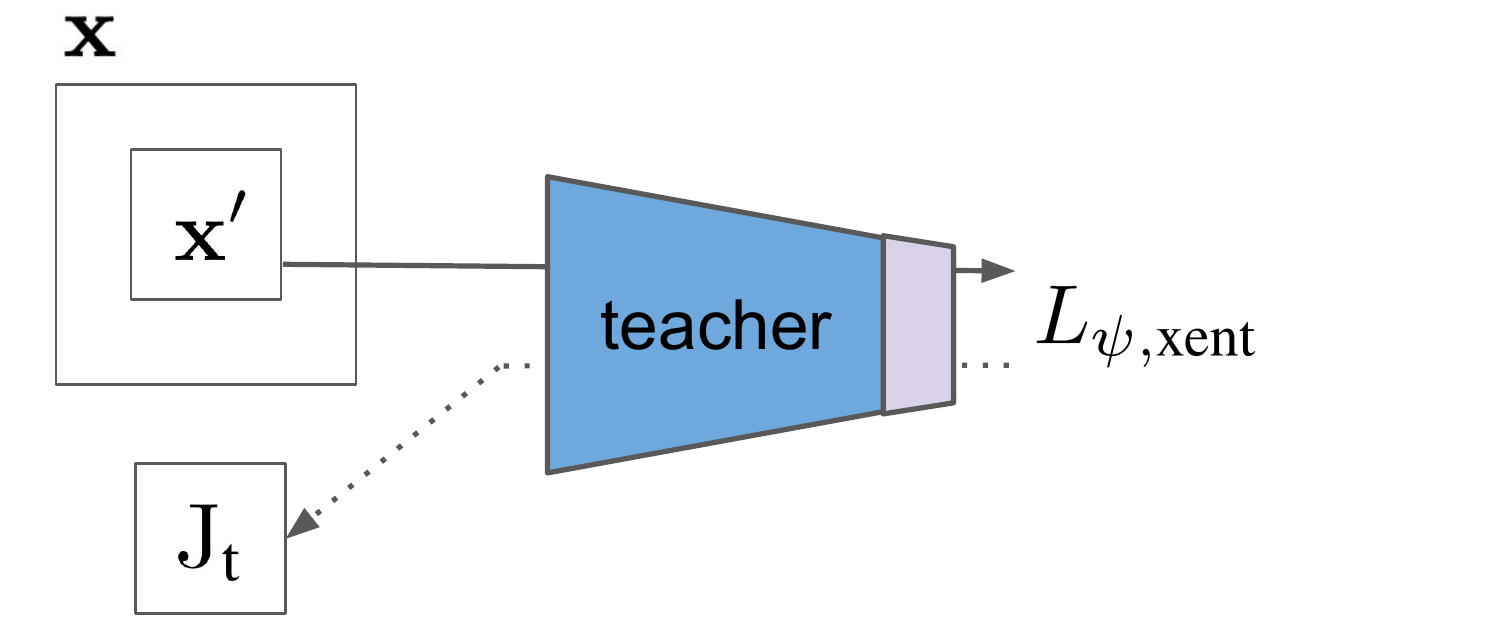}
  \caption{Input cropping}
  \label{fig:image crop}
\end{subfigure}
\begin{subfigure}{\linewidth}
  \centering
  \includegraphics[width=0.48\linewidth]{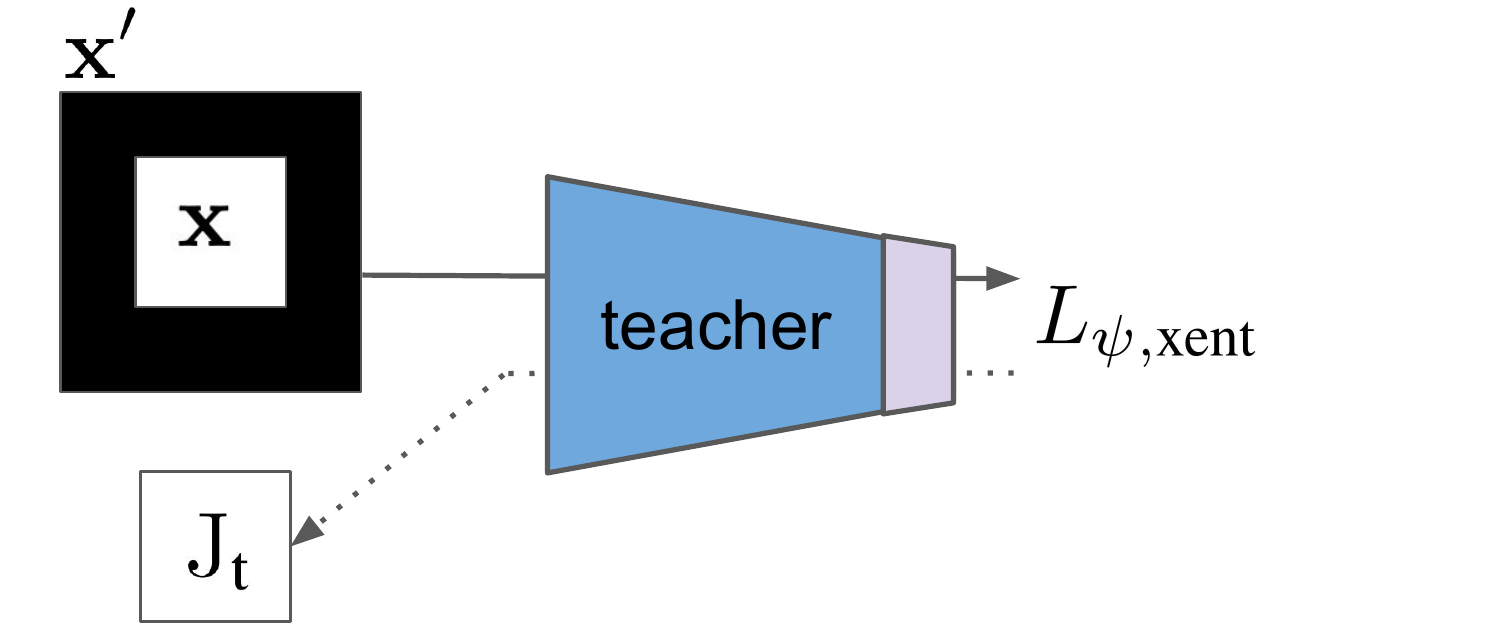}
  \caption{Input padding}
  \label{fig:image pad}
\end{subfigure}
\caption{Transformations to fit images to teacher model's input dimensions.}
\label{fig:input transformations}
\end{figure}

\section{Experiments}
We conducted experiments with IGAM on source-target data pairs comprising of MNIST, CIFAR-10, CIFAR-100 and Tiny-ImageNet. These datasets allow us to validate the effectiveness of IGAM in transferring across tasks with different image dimensions. Unless otherwise stated, adversarial robustness is evaluated based on $l_{\infty}$ adversarial examples with $\varepsilon = \frac{8}{255})$. IGAM's hyperparameters such as $\lambda_{\text{adv}}, \lambda_{\text{diff}}$ and $f_{disc}$ for each experiment are included in the supplementary material.

% \lipsum[1]

\subsection{CIFAR-10 Target Task} \label{sec:cifar10 target task}
In our experiments with CIFAR-10 as the target task, we study two types of robustness transfer. The upwards transfer involves employing IGAM to transfer robustness from a smaller model trained on the simpler MNIST dataset to a larger CIFAR-10 classifier. Conversely, the downwards transfer experiments involve transferring robustness from a 200-class Tiny-ImageNet model to a CIFAR-10 classifier.

\subsubsection{Upwards Transfer} \label{sec:cifar10 upwards}

\paragraph{Setup}
CIFAR-10 is a 10-class colored image dataset comprising of 50k training and 10k test images, each of size $32\times32\times3$. For the CIFAR-10 student model, we use a Wide-Resnet 32-10 model with similar hyperparameters to \cite{madry2017towards} and train it for 200 epochs on natural training images with IGAM. The MNIST dataset consists of 60k training and 10k test binary-colored images, each of size $28\times28\times1$. For the robust teacher model trained on MNIST, we also follow the same adversarial training setting and 2-CNN layered architecture as \cite{madry2017towards} \footnote{Robust MNIST pre-trained model downloaded from https://github.com/MadryLab/MNIST\_challenge}. The teacher model is finetuned on natural CIFAR-10 images for 10 epochs before using it to train the student model with IGAM. Since the input dimensions of CIFAR-10 and MNIST are different, we average pool pixel values across the color channels of CIFAR-10 images to get dimension $32\times32\times1$ and subsequently center crop them into $28\times28\times1$ input images for the MNIST teacher model. With this same input transformation, we also finetune the final logit layer of a robust MNIST model on CIFAR-10 images similar to \cite{shafahi2019adversarially} for 100 epochs, to compare as a baseline (FT-MNIST). We also train a strong robust classifier, with 7-step PGD adversarial training like in \cite{madry2017towards}, with the same architecture as the IGAM student model to compare.

\paragraph{Results}
In the face of adversarial examples, the IGAM-trained student model outperforms the standard and finetuned baselines by large margins (Table~\ref{tab:cifar10}). Despite the difference between the dataset domains and model architectures, IGAM can transfer robustness from the teacher to the student model to almost match that from a strong adversarially trained (AT) model. The IGAM student model has higher clean test accuracy than the robust PGD7-trained baseline which we believe is a result of using natural (not adversarially perturbed) images as training data in IGAM.

We note that though finetuning was previously showed to have positive results in transferring robustness across relatively similar domains like between CIFAR10 and CIFAR100 \cite{shafahi2019adversarial}, it fails to transfer successfully here. This is likely due to the bigger difference between the MNIST and CIFAR-10 dataset, as well as the requirement of a more sophisticated model architecture for the more challenging CIFAR-10 dataset.

\begin{table}[ht]
    \centering
    \footnotesize
    \caption{Accuracy (\%) on clean and adversarial CIFAR-10 test samples with upwards transfer.}
        \begin{tabular}{ lccccc }
         \hline
         Model & Clean & FGSM & PGD5 & PGD10 & PGD20 \\
         \hline
         Standard & \textbf{95.0} & 13.4 & 0 & 0 & 0 \\
        %   (from MNIST)
         FT-MNIST & 33.4 & 1.51 & 0.44 & 0.15 & 0.12 \\
         IGAM-MNIST & 93.6 & \textbf{67.8} & \textbf{63.6} & \textbf{56.9} & \textbf{43.5} \\
         \hline
         PGD7-trained & 87.3 & 56.2 & 55.5 & 47.3 & 45.9 \\
         \hline
        \end{tabular}
\label{tab:cifar10}
\end{table}

\subsubsection{Downwards Transfer}
\paragraph{Setup}
Tiny-ImageNet is a 200-class image dataset where each class contains 500 training and 50 test images. Each Tiny-ImageNet image has dimension of $64\times64\times3$. For the robust teacher model trained on Tiny-ImageNet, we use a similar Wide-Resnet 32-10 model since it is compatible with a larger input dimension due to its global average pooling operation of the feature maps before fully connected layers. We robustly train this teacher model on Tiny-ImageNet, following the same adversarial training hyperparameters in \cite{madry2017towards} where robust models are trained with $l_{\infty}$ adversarial examples generated by 7-step PGD. Before using it to train the student model with IGAM, the teacher model is finetuned on natural CIFAR-10 images for 6 epochs. Since the input dimensions of CIFAR-10 and Tiny-ImageNet are different, we resize the $32\times32\times3$ CIFAR-10 images with bilinear interpolation to get dimension $64\times64\times3$ for finetuning the teacher model. For the IGAM student model, we use the same Wide-Resnet 32-10 model and hyperparameters as in \S~\ref{sec:cifar10 upwards}. We also finetune the final logit layer of a robust Tiny-ImageNet model on upsized CIFAR-10 images similar to \cite{shafahi2019adversarially} for 100 epochs, to compare as a baseline (FT-TinyImagenet). We also investigate two more types of input transformation for IGAM here. The first is a trained $3\times3$ transpose convolutional filter, with stride 2, to upscale the CIFAR-10 images to size $64\times64\times3$. This single transpose convolutional layer is trained together with the teacher model while finetuning on natural CIFAR-10 images. The second type of input transformation is padding, as detailed in \S~\ref{sec:input padding}, of which we explore two variants: center-padding and random-padding.

\paragraph{Results}
With input padding or input resizing, the IGAM-trained student model outperforms the standard and finetuned baselines in adversarial robustness (Table~\ref{tab:cifar10-downwards}). From our experiments, using padding or resizing is more effective for downwards transfer of robustness, with slightly better results for resizing. With the downwards transfer, the student model can match the strong PGD7-trained baseline even more closely than in the upwards transfer case (Table~\ref{tab:cifar10}). This is expected since the teacher model was robustly trained in a more challenging Tiny-ImageNet task and would likely learn even more robust representations than if it were trained on the simpler datasets like MNIST. Compared to upwards transfer, the finetuning baseline transfers robustness and clean accuracy performance to a larger extent but is still outperformed by IGAM.

\begin{table*}[ht]
    \centering
    \scriptsize
    % \footnotesize
    \caption{Accuracy (\%) on clean and adversarial CIFAR-10 test samples with downwards transfer.}
        \begin{tabular}{ lccccccc }
         \hline
         Model & Clean & FGSM & PGD5 & PGD10 & PGD20 & PGD50 & PGD100 \\
         \hline
         Standard & \textbf{95.0} & 13.4 & 0 & 0 & 0 & 0 & 0 \\
         FT-TinyImagenet & 77.2 & 37.7 & 33.9 & 28.0 & 24.9 & 23.0 & 22.5 \\
         IGAM-TransposeConv & 93.2 & \textbf{65.0} & \textbf{58.8} & 44.5 & 32.4 & 22.4 & 18.7 \\
         IGAM-RandomPad & 88.3 & 35.8 & 43.9 & 40.1 & 38.6 & 37.8 & 37.6 \\
         IGAM-Pad & 87.9 & 51.6 & 52.2 & 46.6 & 44.0 & 43.0 & 42.5 \\
         IGAM-Upsize & 88.7 & 54.0 & 52.5 & \textbf{47.6} & \textbf{45.1} & \textbf{43.5} & \textbf{43.0} \\
         \hline
         PGD7-trained & 87.25 & 56.22 & 55.5 & 47.3 & 45.9 & 45.4 & 45.3 \\
         \hline
        \end{tabular}
\label{tab:cifar10-downwards}
\end{table*}

\subsubsection{Input Gradients}
When comparing the input gradients of the various baseline and IGAM models (Figure~\ref{fig:input gradients}), we can observe that there is a diverse degree of saliency. The IGAM models' input gradients appear less noisy than a standard trained model as what we aim to achieve with our proposed method. Interestingly, the IGAM-MNIST model's input gradients have a degree of saliency despite the sparse input gradients from its FT-MNIST teacher model. For IGAM models with a Tiny-ImageNet teacher, the more robust variants like IGAM-Upsize and IGAM-Pad display less noisy input gradients than the less robust IGAM-RandomPad and IGAM-TransposeConv. More input gradient samples are displayed in Figure~\ref{fig:input gradients appendix} of the supplementary material.

\begin{figure*}[ht]
    \centering
    \includegraphics[width=0.5\linewidth]{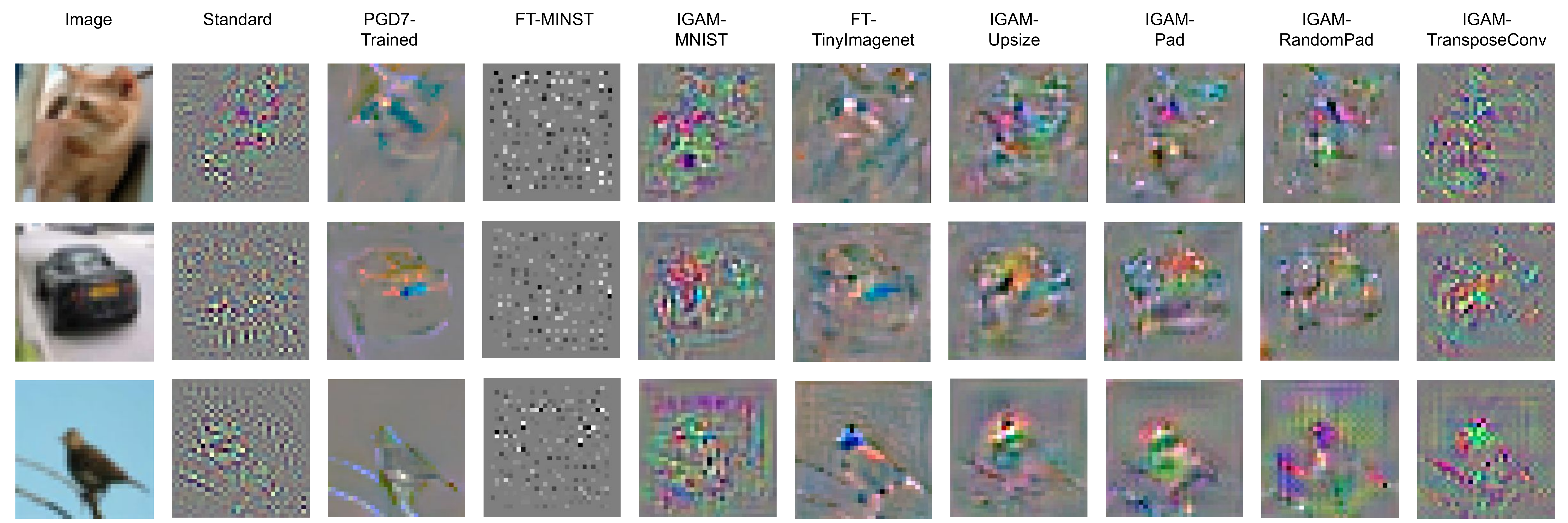}
    \caption{Input gradients of different models.}
    \label{fig:input gradients}
\end{figure*}

\subsection{CIFAR-100 Target Task}
We further study IGAM performance in upwards transfer of robustness with CIFAR-100 as the target task, MNIST and CIFAR-10 as the source task.

% \subsubsection{Robustness Transfer}
\paragraph{Setup}
CIFAR-100 is a 100-class colored image dataset comprising of 50k training and 10k test images. Similar to CIFAR-10, each image has a dimension of $32\times32\times3$. For the CIFAR-100 student model, we use a Wide-Resnet 32-10 model with similar hyperparameters as \S~\ref{sec:cifar10 upwards} except for the final logit layer, which has 100 instead of 10 class outputs. We train the student model for 200 epochs on natural CIFAR-100 training images with IGAM. The robust MNIST teacher model used is similar to the one in \S~\ref{sec:cifar10 upwards}. For the robust CIFAR-10 teacher model, we also follow the same adversarial training setting and architecture as \cite{madry2017towards} \footnote{Robust CIFAR-10 pre-trained model downloaded from https://github.com/MadryLab/cifar10\_challenge}. During IGAM training with MNIST as the source task, the input transformation same as in \S~\ref{sec:cifar10 upwards} is used to resize CIFAR-100 images into $28\times28\times1$ inputs for the teacher model. No input transformation is used when the source task is CIFAR-10 since its images have the same dimensions as CIFAR-100's. The final logit layers of MNIST and CIFAR-10 teacher models are finetuned for 10 and 6 epochs, respectively, on natural CIFAR-100 images before been used to transfer robustness in IGAM. We also finetune the final logit layer of a robust CIFAR-10 model on CIFAR-100 for 100 epochs, to compare as a baseline (FT-CIFAR10). We also train a strong robust classifier, with 7-step PGD adversarial training like in \cite{madry2017towards}, with the same architecture as the IGAM student model to compare.
% \lipsum[1]

\paragraph{Results}
Similar to our findings in \S~\ref{sec:cifar10 target task}, IGAM-trained models outperform standard and finetuned baselines in adversarial robustness (Table~\ref{tab:cifar100}). Expectedly, using CIFAR-10 as the source task yields higher transferred robustness than using MNIST for IGAM. Since CIFAR-10 is closer to CIFAR-100 and more challenging than MNIST, the CIFAR-10 teacher model likely has more robust and relevant representations that are reflected as more robust input gradients.

We note that though CIFAR-10 and CIFAR-100 are the most similar datasets in our experiments, the finetuned baseline has lower clean accuracy and adversarial robustness compared to IGAM models. Finetuned models' weights are frozen up until the final logit layer to retain learned robust representations. While weight freezing maintains a degree of robustness to outperform standard training, it may restrict the model from learning new representations relevant to the target task, explaining its lower clean accuracy. We believe this restriction also explains its lower robustness compared to IGAM since IGAM models are free to learn representations important for the target task.

\begin{table}[ht]
    \centering
    \footnotesize
    \caption{Accuracy (\%) on clean and adversarial CIFAR-100 test samples.}
        \begin{tabular}{ lccccc }
         \hline
         Model & Clean & FGSM & PGD5 & PGD10 & PGD20 \\
         \hline
         Standard & \textbf{78.7} & 7.95 & 0.13 & 0.03 & 0\\
         FT-CIFAR10 & 49.3 & 17.2 & 15.3 & 11.7 & 10.5 \\
         IGAM-MNIST & 73.16 & \textbf{41.41} & \textbf{33.09} & 23.35 & 17.67 \\
         IGAM-CIFAR10 & 62.39 & 34.31 & 29.59 & \textbf{24.05} & \textbf{21.74} \\
         \hline
         PGD7-trained & 60.4 & 29.1 & 29.3 & 24.3 & 23.5 \\
         \hline
        \end{tabular}
\label{tab:cifar100}
\end{table}

\paragraph{Roles of Loss Terms}
Improvements from the two terms are additive to each other, as reflected in Table~\ref{tab:lambda diff} and \ref{tab:lambda adv}. From Figure~\ref{fig:all landscapes} in the supplementary material, we observe that both the $L_{\text{adv}}$ and $L_{\text{diff}}$ smoothen the decision boundaries and lower cross-entropy values in the loss landscape compared to the standard trained baseline. 

\begin{table}[ht]
    \centering
    \footnotesize
    \scriptsize
    \caption{IGAM-CIFAR10 accuracy (\%) with varying $\lambda_{\text{diff}}$.}
        \begin{tabular}{ l|cccc }
         \hline
         $\lambda_{\text{diff}}$ & 0 & 2.5 & 5 & 10 \\
         \hline
         PGD20 & 16.0 & 16.3 & 21.7 & 21.7\\
         Clean & 58.9 & 61.8 & 62.9 & 62.4\\
         \hline
        \end{tabular}
\label{tab:lambda diff}
\end{table}

\begin{table}[ht]
    \centering
    \footnotesize
    \scriptsize
    \caption{IGAM-CIFAR10 accuracy (\%) with varying $\lambda_{\text{adv}}$.}
        \begin{tabular}{ l|cccc }
         \hline
         $\lambda_{\text{adv}}$ & 0 & 0.5 & 1 & 2 \\
         \hline
         PGD20 & 3.9 & 4.34 & 7.37 & 21.7\\
         Clean & 78.4 & 77.4 & 74.3 & 62.4\\
         \hline
        \end{tabular}
\label{tab:lambda adv}
\end{table}

\paragraph{Compute Time}
% Compared to adversarial training, IGAM requires lesser computational resources to train (Table~\ref{tab:wall clock time}). 
Since finetuning is conducted once, we amortize its time taken over each IGAM epoch to arrive at 347s, which is lower than the 815s taken for a 7-step PGD epoch. Even though IGAM involves an additional discriminator update step on top of standard training, the parameter size of the discriminator is much smaller than the classifier model. 
% Moreover, the input gradient of the teacher model can be pre-saved before IGAM to avoid its feed-forward computation to further save time.

% \begin{table}[ht]
%     \centering
%     \footnotesize
%     \caption{Average wall-clock time per training epoch for CIFAR-100 adversarial defenses.}
%         \begin{tabular}{ lcc }
%          \hline
%          Model & 7-step PGD & IGAM  \\
%          \hline
%          Time (sec) & 815 & 347 \\
%          \hline
%         \end{tabular}
% \label{tab:wall clock time}
% \end{table}

\subsection{Tiny-ImageNet Target Task}
We study if robustness can transfer through the input gradients when the target task has significantly larger input dimensions than the source task, with Tiny-ImageNet as the target task and CIFAR-10/100 as the source task.

\paragraph{Setup}
For the robust CIFAR-10/100 teacher model, we follow the same adversarial training setting and Wide-Resnet 32-10 architecture as \cite{madry2017towards}. We use a similar Wide-Resnet 32-10 model for the Tiny-ImageNet student model due to its compatible with larger input dimension due to its global average pooling layer. The robust CIFAR-10/100 teacher models are finetuned for 5 epochs on natural Tiny-ImageNet images before being used for IGAM. Since the input dimensions of Tiny-ImageNet and CIFAR-10/100 are different, we study two types of input transformation to reshape the image dimension from $64\times64\times3$ to $32\times32\times3$ for finetuning the teacher model. The first is image resizing with bilinear interpolation (\S~\ref{sec:input resizing}), which is equivalent to a $2\times2$ average pooling layer with stride 2. The second transformation is center-cropping as detailed in \S~\ref{sec:input cropping}. The models' adversarial robustness is evaluated based on 5-step PGD attacks on test Tiny-ImageNet samples.

\paragraph{Results}
Similar to previous target-source task pairs, IGAM can transfer robustness even to much more challenging dataset, to a degree to outperform the standard trained and finetuned baselines (Figure~\ref{fig:tinyimagenet acc}). There is no visible difference in robustness transferred when using image resizing or center-cropping as the input transformation.

% \vspace{50mm}
% ***** working line *****
% \vspace{50mm}

\begin{figure}[!htbp]
    \centering
    \includegraphics[width=0.8\linewidth]{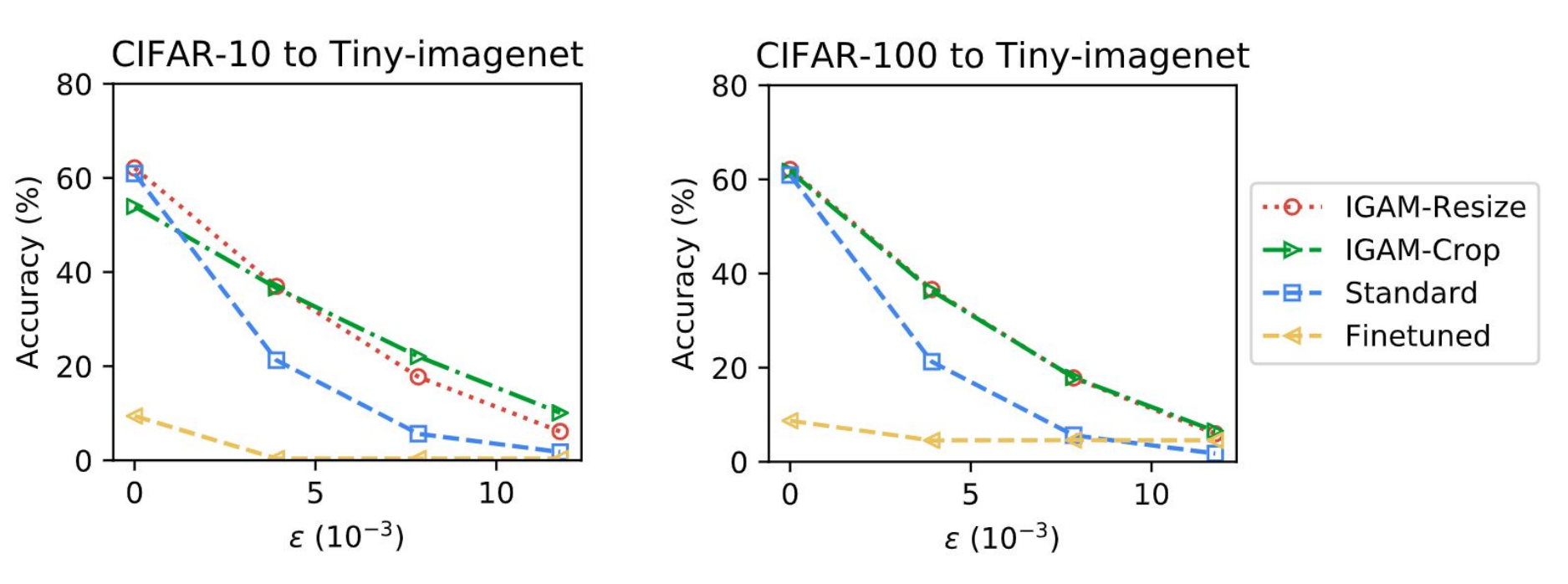}
    \caption{Accuracy (\%) on clean and adversarial Tiny-ImageNet test samples.}
    \label{fig:tinyimagenet acc}
\end{figure}

\section{Theoretical Discussion}
To understand how robustness transfer across input gradients of the student and teacher models, we first look at the link between robustness and saliency of input gradients in a single network. The link is formalized in Theorem 2 of \cite{etmann2019connection} which states that a network's linearized robustness ($\rho$) around an input $\xinput$ is upper bounded by alignment term $\alpha$:
\begin{equation} \label{eq:linearized robustness bound}
\rho(\xinput) \leq \alpha(\xinput) + \frac{C}{\| g \|}
% \rho(\xinput) \leq \alpha(\xinput) + \frac{C}{\| g \|}
\end{equation}
where $g$ is the Jacobian of the difference between the top two logits, $\alpha(\xinput) = \frac{|\langle \xinput, g \rangle|}{\| g \|}$ and $C$ is a positive constant. An important notion here is that a model with high linearized robustness ($\rho$) retains its original prediction in face of large perturbation but may still perform poorly on clean test data with incorrect original outputs, such as finetuned teachers.

Different finetuned teacher models (FT-MINST and FT-TinyImagnet) display visually different input gradients which we speculate to be a result of being `locked' into their dataset-specific robust features. Different from natural images which have smooth pixel value distributions, MNIST pixels take extreme binary values. From the robustness-alignment link, one can expect the input gradient to also take extreme values, explaining the sparse $J$ of FT-MINST.

With Theorem~\ref{theorem:generator identity mapping} below, IGAM's $L_{\text{adv}}$ term encourages the teacher and student models' input gradients and, consequently, their input alignment terms ($\alpha$) to match well.
\begin{theorem} \label{theorem:generator identity mapping}
The global minimum of $L_{\text{adv}}$ is achieved when $\Js = \Jt$.
\end{theorem}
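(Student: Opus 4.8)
The plan is to adapt the classical minimax analysis of GANs \cite{goodfellow2014generative} to the present setting, where the ``real'' samples are teacher input gradients and the ``generated'' samples are student input gradients. First I would make precise the two distributions involved: fixing the student parameters $\theta$ induces a distribution $p_{\text{s}}$ on $\mathbb{R}^d$ as the pushforward of $\mathcal{D}_{\text{target}}$ under the map $\xinput \mapsto \Js(\xinput)$, and the fixed (post-finetuning) teacher induces $p_{\text{t}}$ as the pushforward under $\xinput \mapsto \Jt(\xinput)$. In these terms $L_{\text{adv}} = \mathbb{E}_{J \sim p_{\text{t}}}[\log \fdisc(J)] + \mathbb{E}_{J \sim p_{\text{s}}}[\log(1-\fdisc(J))]$, the discriminator maximizes it over $\phi$, and we study the value $V(\theta) := \max_{\phi} L_{\text{adv}}$ that the student minimizes; the claim is that $\Js = \Jt$ is a global minimizer of $V$.

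The key steps, in order: (i) for a fixed student, compute the optimal discriminator by pointwise optimization — for each $J$ the integrand has the form $a\log y + b\log(1-y)$ with $a = p_{\text{t}}(J)$ and $b = p_{\text{s}}(J)$, which on $(0,1)$ is strictly concave and maximized at $y^\star = a/(a+b)$, so $\fdisc^\star(J) = p_{\text{t}}(J)/(p_{\text{t}}(J)+p_{\text{s}}(J))$ wherever the denominator is nonzero; (ii) substitute $\fdisc^\star$ back into $L_{\text{adv}}$ to get $V(\theta) = \mathbb{E}_{p_{\text{t}}}\!\bigl[\log \tfrac{p_{\text{t}}}{p_{\text{t}}+p_{\text{s}}}\bigr] + \mathbb{E}_{p_{\text{s}}}\!\bigl[\log \tfrac{p_{\text{s}}}{p_{\text{t}}+p_{\text{s}}}\bigr]$, and rewrite this, setting $m = (p_{\text{t}}+p_{\text{s}})/2$, as $-\log 4 + \klinfdiv{p_{\text{t}}}{m} + \klinfdiv{p_{\text{s}}}{m}$, i.e.\ $-\log 4$ plus twice the Jensen--Shannon divergence between $p_{\text{t}}$ and $p_{\text{s}}$; (iii) invoke non-negativity of KL divergence, so $V(\theta) \ge -\log 4$ for every $\theta$, with equality iff $p_{\text{t}} = p_{\text{s}}$ almost everywhere; (iv) observe that $\Js(\xinput) = \Jt(\xinput)$ for $\mathcal{D}_{\text{target}}$-almost every $\xinput$ forces $p_{\text{s}} = p_{\text{t}}$, hence attains the bound $-\log 4$ and is a global minimizer — which is the assertion.

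I expect the main subtlety, rather than a genuine obstacle, to be the bookkeeping around the supports of $p_{\text{s}}$ and $p_{\text{t}}$: the optimal-discriminator formula is only valid where $p_{\text{t}}+p_{\text{s}}>0$, and one must argue (as in the original proof) that the null set contributes nothing and that $\fdisc$ never needs to be evaluated outside $\mathrm{supp}(p_{\text{t}})\cup\mathrm{supp}(p_{\text{s}})$. A second point worth a sentence is that attaining the optimum presupposes the discriminator family is expressive enough to realize $\fdisc^\star$; as is conventional for results of this flavor, I would state the theorem under the idealization of arbitrary-capacity $\fdisc$, so that the statement concerns the global minimum of the idealized objective rather than what SGD reaches in practice. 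Everything else is a routine reprise of the Goodfellow et al.\ computation.
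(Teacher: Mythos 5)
Your proposal is correct and follows essentially the same route as the paper's own proof: derive the optimal discriminator $f_{\text{disc}}^\star = p_{\text{t}}/(p_{\text{t}}+p_{\text{s}})$, substitute it back to rewrite $L_{\text{adv}}$ as $2\,JS(p_{\text{t}}\|p_{\text{s}}) - \log 4$, and conclude from non-negativity of the Jensen--Shannon divergence that the minimum $-\log 4$ is attained when $\Js = \Jt$ forces $p_{\text{s}} = p_{\text{t}}$. Your extra care about pushforward distributions, supports, and discriminator capacity is a reasonable refinement but does not change the argument.
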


Its proof is in the supplementary material (\S~\ref{sec:proof appendix}). As a result, the high linearized robustness upper bound of teacher model is transferred to the student model. Though input gradients are approximations of $g$ and the upper bound is not tight, we observe that such transfer is feasible in our experiments. On top of this transferred robustness bound, all of the student model's weights are free to learn features relevant to the target task in boosting its clean accuracy, hence the improved performance over its teacher models. 

\section{Conclusions}
We showed that input gradients are an effective medium to transfer adversarial robustness across different tasks and even across different model architectures. To train a student model's input gradients to semantically match those of a robust teacher model, we proposed input gradient adversarial matching (IGAM) to optimize for the input gradients' source to be indistinguishable for a discriminator network. Through extensive experiments on image classification, IGAM models outperform standard trained models and models finetuned on pre-trained robust feature extractors. This demonstrates that input gradients are a more versatile and effective medium of robustness transfer. We hope that this will encourage new defenses that also target the semantics of input gradients to achieve adversarial robustness.
% \lipsum[1]

\subsubsection*{Acknowledgments}
This paper is supported in part by the National Research Foundation, Singapore under its AI Singapore Programme (AISG Award No: AISG-RP-2018-004), and the Data Science and Artificial Intelligence Research Center
at Nanyang Technological University. Any opinions, findings
and conclusions or recommendations expressed in this
material are those of the authors and do not reflect the views
of National Research Foundation, Singapore.

{\small
\bibliographystyle{ieee_fullname}
\bibliography{egbib}
}

% \newpage
\appendix

\clearpage

% \section{Input Gradients}

\begin{figure*}[!htbp]
    \centering
    \includegraphics[width=0.8\linewidth]{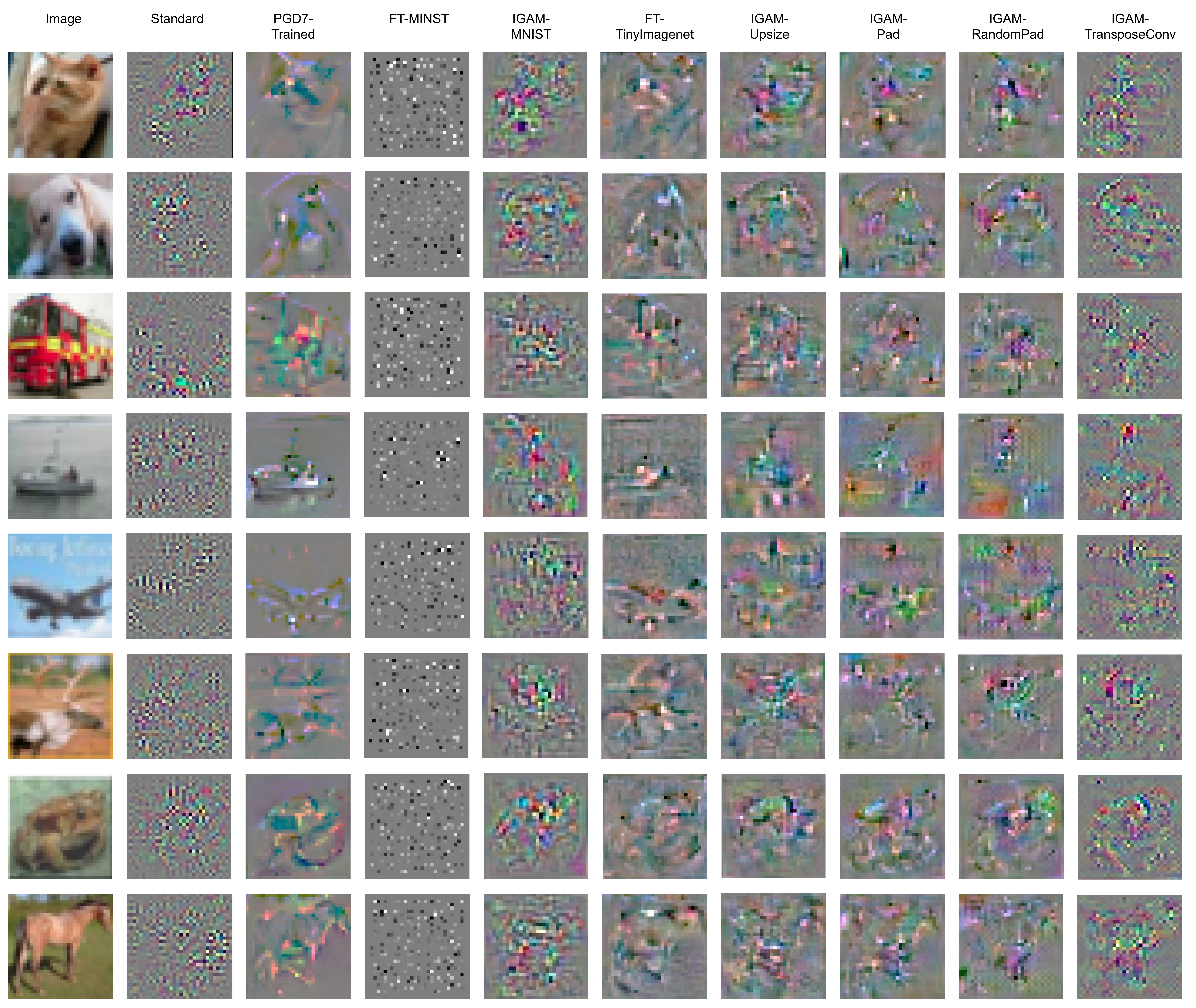}
    \caption{Input gradients of different models.}
    \label{fig:input gradients appendix}
\end{figure*}

\begin{figure*}[!htbp]
\centering
\includegraphics[width=\linewidth]{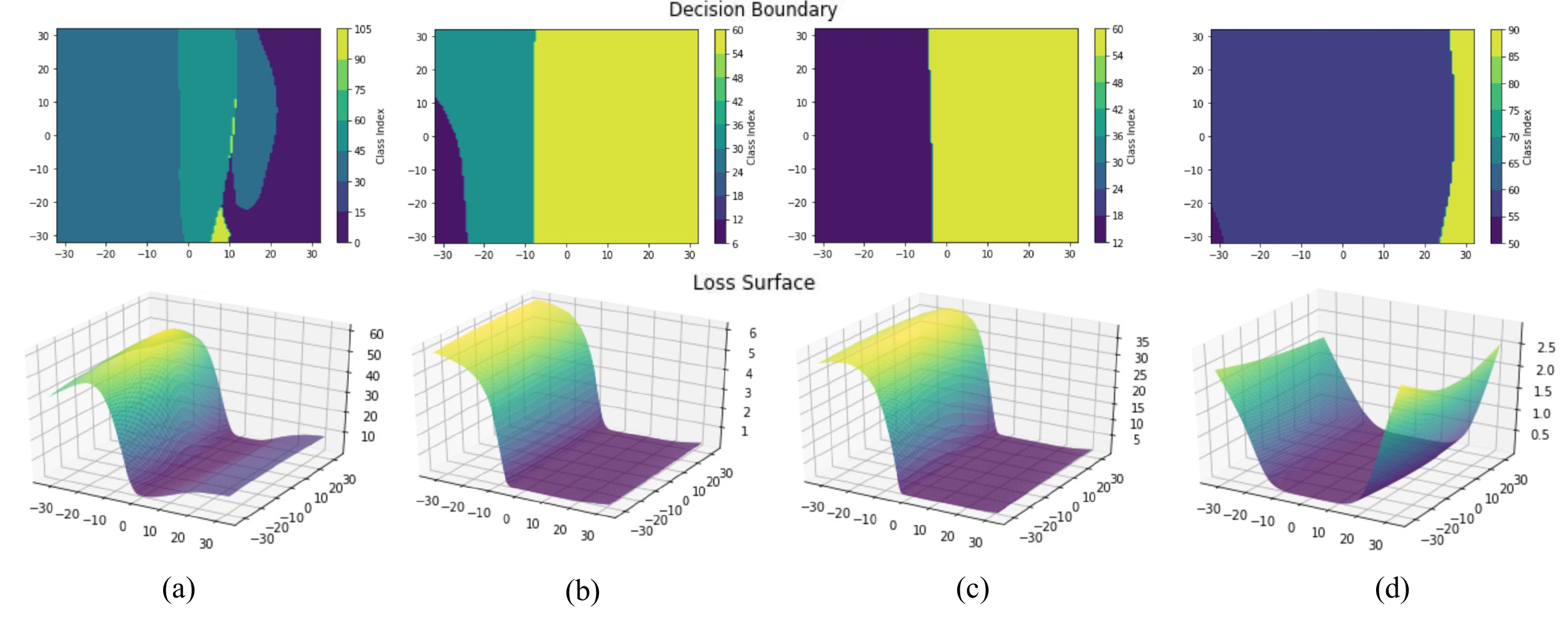}
\caption{Decision boundaries and loss landscapes of (a) standard trained, (b) IGAM-CIFAR10 ($\lambda_{\text{adv}} = 2, \lambda_{\text{diff}} = 0$), (c) IGAM-CIFAR10 ($\lambda_{\text{adv}} = 0, \lambda_{\text{diff}} = 10$) and (d) IGAM-CIFAR10 ($\lambda_{\text{adv}} = 2, \lambda_{\text{diff}} = 10$) along the adversarial perturbation and a random direction. Correct class: \#53.}
\label{fig:all landscapes}
\end{figure*}

\section{IGAM Hyperparameters}
The IGAM hyperparameters are fined through grid search through the same range of hyperparameter values within each transfer task. We report the values of the IGAM models whose results are reported in this paper for reproducibility.

\subsection{CIFAR-10 Target Task}
\paragraph{IGAM-MNIST} 
$\lambda_{\text{adv}} = 1$,
$\lambda_{\text{diff}} = 100$,
\linebreak
$f_{disc}$ : 5 CNN layers (16-32-64-128-256 output channels) and updated once for every 10 classifier update steps

\paragraph{IGAM-TranposeConv} 
$\lambda_{\text{adv}} = 1$,
$\lambda_{\text{diff}} = 10$,
\linebreak
$f_{disc}$ : 4 CNN layers (8-16-32-64 output channels) and updated once for every 5 classifier update steps

\paragraph{IGAM-RandomPad} 
$\lambda_{\text{adv}} = 1$,
$\lambda_{\text{diff}} = 10$,
\linebreak
$f_{disc}$ : 4 CNN layers (8-16-32-64 output channels) and updated once for every 5 classifier update steps

\paragraph{IGAM-Pad} 
$\lambda_{\text{adv}} = 2$,
$\lambda_{\text{diff}} = 20$,
\linebreak
$f_{disc}$ : 4 CNN layers (8-16-32-64 output channels) and updated once for every 5 classifier update steps

\paragraph{IGAM-Upsize} 
$\lambda_{\text{adv}} = 5$,
$\lambda_{\text{diff}} = 10$,
\linebreak
$f_{disc}$ : 4 CNN layers (8-16-32-64 output channels) and updated once for every 5 classifier update steps

\subsection{CIFAR-100 Target Task}
\paragraph{IGAM-MNIST} 
$\lambda_{\text{adv}} = 0.1$,
$\lambda_{\text{diff}} = 200$,
\linebreak
$f_{disc}$ : 5 CNN layers (16-32-64-128-256 output channels) and updated once for every 5 classifier update steps

\paragraph{IGAM-CIFAR10} 
$\lambda_{\text{adv}} = 2$,
$\lambda_{\text{diff}} = 10$,
\linebreak
$f_{disc}$ : 5 CNN layers (16-32-64-128-256 output channels) and updated once for every 10 classifier update steps

\subsection{Tiny-ImageNet Target Task}
\paragraph{IGAM-CIFAR10-Resize} 
$\lambda_{\text{adv}} = 0.1$,
$\lambda_{\text{diff}} = 200$,
\linebreak
$f_{disc}$ : 4 CNN layers (8-16-32-64 output channels) and updated once for every 5 classifier update steps

\paragraph{IGAM-CIFAR10-Crop} 
$\lambda_{\text{adv}} = 2$,
$\lambda_{\text{diff}} = 50$,
\linebreak
$f_{disc}$ : 4 CNN layers (8-16-32-64 output channels) and updated once for every 5 classifier update steps

\paragraph{IGAM-CIFAR100-Resize} 
$\lambda_{\text{adv}} = 0.1$,
$\lambda_{\text{diff}} = 200$,
\linebreak
$f_{disc}$ : 4 CNN layers (8-16-32-64 output channels) and updated once for every 5 classifier update steps

\paragraph{IGAM-CIFAR100-Crop} 
$\lambda_{\text{adv}} = 0.5$,
$\lambda_{\text{diff}} = 200$,
\linebreak
$f_{disc}$ : 4 CNN layers (8-16-32-64 output channels) and updated once for every 5 classifier update steps

\section{Proof} \label{sec:proof appendix}
\begin{theorem} \label{theorem:generator identity mapping appendix}
The global minimum of $L_{\text{adv}}$ is achieved when $\Js = \Jt$.
\end{theorem}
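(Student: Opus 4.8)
The plan is to mirror the classical analysis of the GAN value function (Goodfellow et al.~\cite{goodfellow2014generative}), reading the statement as a claim about the minimax optimum: $L_{\text{adv}}$ in Equation~(\ref{eq:adv loss}) is exactly the GAN objective with ``generator'' $\fs$ and ``discriminator'' $\fdisc$, where the role of the data distribution is played by the distribution $p_{\ft}$ of teacher input gradients $\Jt$ and the role of the model distribution by the distribution $p_{\fs}$ of student input gradients $\Js$ (the pushforward of the data distribution over $\xinput$ through $\xinput \mapsto \Js(\xinput)$). I would show that the value $\min_{\fs}\max_{\fdisc} L_{\text{adv}}$ equals $-\log 4$ and is attained exactly when $p_{\fs} = p_{\ft}$, which in particular holds whenever $\Js(\xinput) = \Jt(\xinput)$ for all $\xinput$.

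First I would solve the inner maximization for a fixed student. Writing $L_{\text{adv}} = \int \bigl[ p_{\ft}(\J)\log\fdisc(\J) + p_{\fs}(\J)\log(1-\fdisc(\J)) \bigr]\, d\J$ and maximizing the integrand pointwise over $\fdisc(\J)\in[0,1]$ --- using that $y\mapsto a\log y + b\log(1-y)$ attains its maximum on $(0,1)$ at $y = a/(a+b)$ --- yields the optimal discriminator $\fdisc^{*}(\J) = p_{\ft}(\J)/\bigl(p_{\ft}(\J)+p_{\fs}(\J)\bigr)$.

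Next I would substitute $\fdisc^{*}$ back to obtain the student's reduced objective $C(\fs) = \E_{\Jt}\bigl[\log \tfrac{p_{\ft}}{p_{\ft}+p_{\fs}}\bigr] + \E_{\Js}\bigl[\log\tfrac{p_{\fs}}{p_{\ft}+p_{\fs}}\bigr]$, and rewrite it by inserting factors of $2$ inside the logarithms: $C(\fs) = -\log 4 + \klinfdiv{p_{\ft}}{\tfrac{p_{\ft}+p_{\fs}}{2}} + \klinfdiv{p_{\fs}}{\tfrac{p_{\ft}+p_{\fs}}{2}} = -\log 4 + 2\,\mathrm{JSD}(p_{\ft}\,\|\,p_{\fs})$. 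Since the Kullback--Leibler divergence is nonnegative and the Jensen--Shannon divergence is zero if and only if its two arguments coincide, $C(\fs)\ge -\log 4$ with equality exactly when $p_{\fs}=p_{\ft}$, at which point $\fdisc^{*}\equiv\tfrac12$. Taking $\Js=\Jt$ pointwise in $\xinput$ forces $p_{\fs}=p_{\ft}$ and hence realizes this minimum, which proves the theorem.

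The main obstacle --- really a modelling caveat rather than a hard step --- is that $\Js$ is not a free distribution but the image of the fixed data distribution under the deterministic, $\theta$-parametrized map $\xinput\mapsto\Js(\xinput)$; the argument above lives at the level of induced distributions and tacitly invokes the usual non-parametric assumption that the student family is expressive enough for $p_{\fs}=p_{\ft}$ to be attainable. I would therefore phrase the conclusion carefully: $\Js=\Jt$ is a sufficient condition that always yields the optimum, and it is the optimizer the min-max game is driven towards. A secondary technicality is assuming $p_{\ft}$ and $p_{\fs}$ admit densities on a common space and are mutually absolutely continuous, so that the expectations can be written as integrals and the KL terms are finite; this is the standard regularity hypothesis in the GAN analysis and I would adopt it without further comment.
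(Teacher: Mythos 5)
Your proposal is correct and follows essentially the same route as the paper's proof: optimal discriminator $p_{\text{teacher}}/(p_{\text{teacher}}+p_{\text{student}})$, substitution into $L_{\text{adv}}$, rewriting as $2\,JS(p_{\text{teacher}}\,\|\,p_{\text{student}})-\log 4$, and concluding the global minimum $-\log 4$ is attained when $\Js=\Jt$ forces the two gradient distributions to coincide. The only differences are cosmetic: you derive the optimal discriminator pointwise rather than citing \cite{goodfellow2014generative}, and you state the regularity and expressivity caveats explicitly, which the paper leaves implicit.
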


\begin{proof}
From \cite{goodfellow2014generative}, the optimal discriminator is
\begin{equation}
f_{\text{disc}}^* (\J) =  \frac{p_{\text{teacher}} (\J)}{p_{\text{teacher}} (\J) + p_{\text{student}} (\J)}
\end{equation}

We can include the optimal discriminator into Equation~(\ref{eq:adv loss}) to get
\begin{equation}
    \begin{aligned}
L_{\text{adv}}
% & = \mathbb{E}_{\xinput \sim p_{\text{teacher}}} [ \log f_{\text{disc}}^*(\mathbf{x}) ] + \mathbb{E}_{\xinput \sim p_{\text{teacher}}} [\log (1 - f_{\text{disc}}^*(G(\xinput)) )] \\
& = \mathbb{E}_{\J \sim p_{\text{teacher}}} [ \log f_{\text{disc}}^*(\J) ] + \mathbb{E}_{\J \sim p_{\text{student}}} [\log (1 - f_{\text{disc}}^*(\J) )] \\
& = \mathbb{E}_{\J \sim p_{\text{teacher}}} \left[ \log \frac{p_{\text{teacher}} (\J)}{p_{\text{teacher}} (\J) + p_{\text{student}} (\J)} \right] \\
& \phantom{=}\,\phantom{=}\, + \mathbb{E}_{\J \sim p_{\text{student}}} \left[ \log \frac{p_{\text{student}} (\J)}{p_{\text{teacher}} (\J) + p_{\text{student}} (\J)} \right] \\
% & = \mathbb{E}_{\J \sim p_{\text{teacher}}} \left[ \log \frac{p_{\text{teacher}} (\J)}{\frac{1}{2} (p_{\text{teacher}} (\J) + p_{\text{student}} (\J) )} \right] + \mathbb{E}_{\J \sim p_{\text{student}}} \left[ \log \frac{p_{\text{student}} (\J)}{\frac{1}{2} (p_{\text{teacher}} (\J) + p_{\text{student}} (\J) )} \right] -2 \log 2\\
% & = KL \left( p_{\text{teacher}} || \frac{p_{\text{teacher}} (\J) + p_{\text{student}} (\J)}{2} \right) +  KL \left( p_{\text{student}} || \frac{p_{\text{teacher}} (\J) + p_{\text{student}} (\J)}{2} \right) - \log 4\\
& = \klinfdiv*{p_{\text{teacher}}}{\frac{p_{\text{teacher}}  + p_{\text{student}} }{2}} \\ 
& \phantom{=}\,\phantom{=}\, + \klinfdiv*{p_{\text{student}}}{\frac{p_{\text{teacher}}+ p_{\text{student}} }{2}} - \log 4\\
& = 2 \cdot JS ( p_{\text{teacher}} || p_{\text{student}} ) - \log 4 \\
    \end{aligned}
\end{equation}

where $KL$ and $JS$ are the Kullback-Leibler and Jensen-Shannon divergence respectively. Since the Jensen-Shannon divergence is always non-negative, $L_{\text{adv}} (G)$ reaches its global minimum value of $- \log 4$ when $JS ( p_{\text{teacher}} || p_{\text{student}} ) = 0$. When $\Js = \Jt$, we get $p_{\text{teacher}} = p_{\text{student}}$ and consequently $JS ( p_{\text{teacher}} || p_{\text{student}} ) = 0$, thus completing the proof.

\end{proof}

% \section{Figures}

\end{document}